\PassOptionsToPackage{table}{xcolor}
\documentclass[sigconf, nonacm]{acmart}
\usepackage{enumitem}
\usepackage{amsmath,amsfonts,bm}

\def\1{\bm{1}}

\DeclareMathAlphabet{\mathsfit}{\encodingdefault}{\sfdefault}{m}{sl}
\SetMathAlphabet{\mathsfit}{bold}{\encodingdefault}{\sfdefault}{bx}{n}

\usepackage{adjustbox}
\usepackage{fontawesome5}
\usepackage{multirow}
\usepackage{array}
\usepackage{tabularx}
\usepackage{colortbl}
\usepackage{xspace}
\definecolor{focuscol}{RGB}{237,237,237}
\definecolor{oursrow}{RGB}{220,237,247}
\definecolor{oursink}{RGB}{27,78,112}

\definecolor{qwenmodel}{HTML}{5D4E8C}
\definecolor{llamamodel}{HTML}{CC6A50}
\colorlet{qwenrow}{qwenmodel!15!white}
\colorlet{llamarow}{llamamodel!15!white}

\definecolor{trainfire}{RGB}{190,65,35}
\definecolor{frozencold}{RGB}{35,105,150}

\newcolumntype{F}{>{\columncolor{focuscol}}c}

\newcommand{\system}{\textsc{CoSTA}\xspace}

\hypersetup{
    colorlinks=true,
    urlcolor=red,
    linkcolor=red,
    citecolor=blue
}
\setcopyright{none}
\begin{document}

\title[CoSTA: Co-Evolving Self-Play Timeline Agents for Complex Temporal Knowledge Graph Question Answering]{\system: Co-Evolving Self-Play Timeline Agents for Complex Temporal Knowledge Graph Question Answering}

\author{Haikun Liu}
\affiliation{
  \institution{Shanxi University}
  \city{Taiyuan}
  \country{China}
}
\email{liuhaikun1@sxu.edu.cn}

\author{Zhichao Yan, Shaoru Guo}
\affiliation{
  \institution{Shanxi University}
  \city{Taiyuan}
  \country{China}
}
\email{\{yanzhichao, gsr\}@sxu.edu.cn}

% \author{Shaoru Guo}
% \affiliation{
%   \institution{Shanxi University}
%   \city{Taiyuan}
%   \country{China}
% }
% \email{gsr@sxu.edu.cn}

\author{Jiawei Gu}
\affiliation{
  \institution{Singapore University of Technology and Design}
  \country{Singapore}
}
\email{jiawei\_gu@sutd.edu.sg}

\author{Víctor Gutiérrez-Basulto}
\affiliation{
  \institution{Cardiff University}
  \city{Cardiff}
  \country{UK}
}
\email{GutierrezBasultoV@cardiff.ac.uk}

\author{Jiapu Wang}
\authornote{Corresponding author.}
\affiliation{
  \institution{Nanjing University of Science and Technology}
  \city{Nanjing}
  \country{China}
}
\email{jiapu.wang@njust.edu.cn}

\author{Ru Li}
\affiliation{
  \institution{Shanxi University}
  \city{Taiyuan}
  \country{China}
}
\email{liru@sxu.edu.cn}

\begin{abstract}
Complex Temporal Knowledge Graph Question Answering (TKGQA) involves more than retrieving facts: the timestamp or validity interval of one fact can constrain the search for another, while multiple intervals may jointly determine the answer. Existing single-agent TKGQA methods typically investigate these facts within a single reasoning context, where evidence from different event histories can become interleaved and distractors can misdirect subsequent retrieval. Distributing investigations across agents can separate event-specific evidence into distinct contexts, but effective collaboration must still be learned because question decomposition, subtask investigation, and evidence synthesis are interdependent. To address these limitations, we propose \textbf{Co}-Evolving \textbf{S}elf-Play \textbf{T}imeline \textbf{A}gents (CoSTA), which organizes facts into navigable timelines, delegates investigations to workers, and uses a coordinator to synthesize their original timestamped observations. CoSTA initializes a shared policy through role-conditioned distillation and jointly optimizes both roles through co-evolving self-play with final-answer rewards. Across three benchmarks, CoSTA achieves state-of-the-art overall performance, with particularly strong gains on complex multi-fact questions, while its compact variants outperform the evaluated closed-source LLM agents\footnote{Code and data are available at: \href{https://github.com/CoSTA.git}{https://github.com/haikunhere/CoSTA}}.
\end{abstract}

\begin{CCSXML}
<ccs2012>
   <concept>
       <concept_id>10010147.10010178.10010187.10010193</concept_id>
       <concept_desc>Computing methodologies~Temporal reasoning</concept_desc>
       <concept_significance>500</concept_significance>
       </concept>
   <concept>
       <concept_id>10010147.10010178.10010219.10010220</concept_id>
       <concept_desc>Computing methodologies~Multi-agent systems</concept_desc>
       <concept_significance>500</concept_significance>
       </concept>
 </ccs2012>
\end{CCSXML}

\ccsdesc[500]{Computing methodologies~Temporal reasoning}
\ccsdesc[500]{Computing methodologies~Multi-agent systems}

\keywords{Temporal Knowledge Graph, Knowledge Graph Question Answering, Multi-Agent Collaboration, Co-Evolving Self-Play}

\maketitle

\section{Introduction}
\label{sec:introduction}

\begin{figure}[t]
  \centering
  \includegraphics[width=0.95\columnwidth]{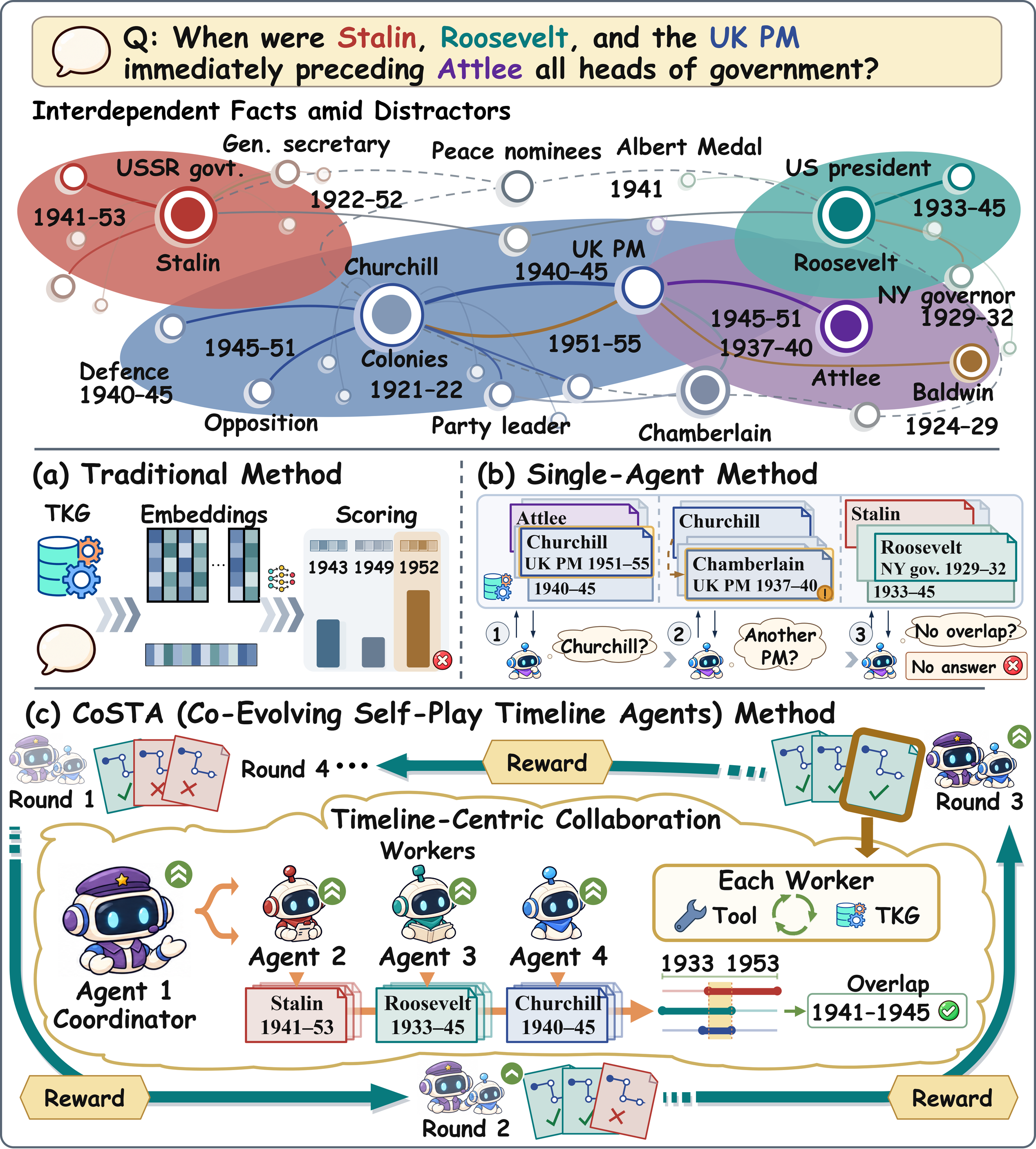}
  \caption{Motivating comparison of (a) Traditional TKGQA, (b) Single-agent TKGQA, and (c) Our proposed \system{}, which coordinates separate timeline investigations and co-evolves the coordinator and workers through self-play.}
  \label{fig:intro}
\end{figure}

Factual knowledge on the Web evolves over time, with collaborative knowledge bases and news media continuously recording changes in entity relations and new events~\citep{vrandecic2014wikidata,chen2023multitq}.
However, conventional knowledge graphs typically treat facts as static, although many real-world relations are valid only within specific periods~\citep{hogan2021knowledge}.
Temporal Knowledge Graphs (TKGs) address this limitation by associating facts with timestamps or validity intervals, thereby capturing the temporal validity and evolution of such relations~\citep{wang2024large,sun2025timelinekgqa,ime2024,qdn2023}.
Building on TKGs, Temporal Knowledge Graph Question Answering (TKGQA) is proposed, which aims to answer natural-language questions by reasoning over temporally qualified facts~\citep{saxena2021cronkgqa,chen2023multitq}.
%By targeting reliable question answering over evolving Web knowledge, our work aligns with the \textbf{Web Mining, Multimedia and Multilingual Content Analysis track}.

However, complex temporal questions may involve dependencies among multiple facts: the time of one fact can constrain the retrieval of another~\citep{chen2023multitq}, while the intervals of several facts may jointly determine the answer~\citep{sun2025timelinekgqa}. For example, the question in Figure~\ref{fig:intro} first requires identifying the \texttt{UK PM} immediately preceding \texttt{Attlee} and then intersecting the tenure intervals of \texttt{Stalin}, \texttt{Roosevelt}, and the identified prime minister.

Such dependencies make complex TKGQA a coordination problem rather than merely a longer reasoning process. When evidence from multiple event histories is accumulated within a single reasoning context, observations, timestamps, and intermediate search steps can become interleaved. This entanglement obscures fact--time correspondences and increases the risk that distractor facts derail subsequent retrieval, as illustrated by the single-agent failure in Figure~\ref{fig:intro}(b).

To separate these event histories without losing their temporal dependencies, we draw inspiration from transactive memory systems, in which human teams distribute knowledge across members while coordinating its retrieval~\citep{wegner1995computer,lewis2003measuring}.
This analogy suggests distributing event-specific investigations while coordinating their evidence under global temporal constraints.
Yet specialization alone does not ensure effective collaboration: a controlled study of human teams found that jointly trained teams outperformed those assembled from individually trained members, with the performance advantage mediated primarily by the development of transactive memory~\citep{liang1995group}.

Existing TKGQA methods optimize individual-agent reasoning and tool use~\citep{gong2026tempr1,huang2026temptoolr1,jiang2026tkgthinker}, but their single-agent objectives do not capture dependencies across collaborative roles.
These roles are mutually dependent: the value of a worker's investigation depends on both the assigned subtask and how its evidence is synthesized, whereas the coordinator is constrained by what workers retrieve and report.
Optimizing either role in isolation may therefore favor behavior that is locally effective but fails to improve final-answer quality.
This calls for team-level learning from end-to-end collaborative outcomes, following the broader principle of co-evolving complementary roles through interaction~\citep{wang2025cure}.

To address these challenges, we propose \textbf{Co}-Evolving \textbf{S}elf-Play \textbf{T}imeline \textbf{A}gents (\textbf{\system}), a timeline-centric collaborative framework for complex TKGQA.
\system organizes TKG facts into chronologically ordered timelines, allowing agents to enter relevant timelines through semantic retrieval, inspect temporal neighborhoods through timeline reading, and traverse to related timelines via shared facts.
A coordinator decomposes each question into interdependent subtasks, which workers investigate in separate contexts before returning their original timestamped tool observations together with concise reports relating the evidence to each subtask.
The coordinator integrates this evidence under the temporal constraints of the full question, for example, by ordering facts or intersecting intervals, and launches follow-up investigations when necessary.

Beyond this inference architecture, \system instantiates the coordinator and workers with a shared policy conditioned on role-specific prompts~\citep{mo2026matpo}.
Role-conditioned distillation from successful collaborative demonstrations first initializes the policy with the reasoning and tool-use behaviors required by each role.
Co-evolving self-play then optimizes the shared policy over complete collaborative trajectories, using cross-role credit assignment to propagate the final-answer reward to every participating role.
Experiments on three TKGQA benchmarks show that \system outperforms prior methods, particularly on complex questions; further analyses validate the contributions of multi-agent collaboration and co-evolving self-play.

The main contributions of this paper are summarized as follows:
\begin{itemize}[left=0cm]
    
 \item \textbf{Timeline-centric collaboration.}
    To the best of our knowledge, we introduce the first jointly trained coordinator--worker framework for complex TKGQA, separating worker-side timeline investigation from coordinator-side synthesis of the original temporal evidence.

    \item \textbf{Co-evolving self-play.}
    We develop a shared-policy strategy that initializes both roles through role-conditioned distillation and jointly optimizes delegation, investigation, and synthesis through self-play with cross-role credit assignment from final-answer feedback.

    \item \textbf{Empirical validation.}
    We show on three benchmarks that CoSTA achieves the best overall performance, especially on multi-fact questions; ablations validate timeline navigation and evidence-preserving collaboration, while compact variants outperform evaluated closed-source LLM agents.

\end{itemize}

\section{Related Work}
\label{sec:related-work}

\begin{figure*}[t]
  \centering
  \includegraphics[width=0.95\textwidth]{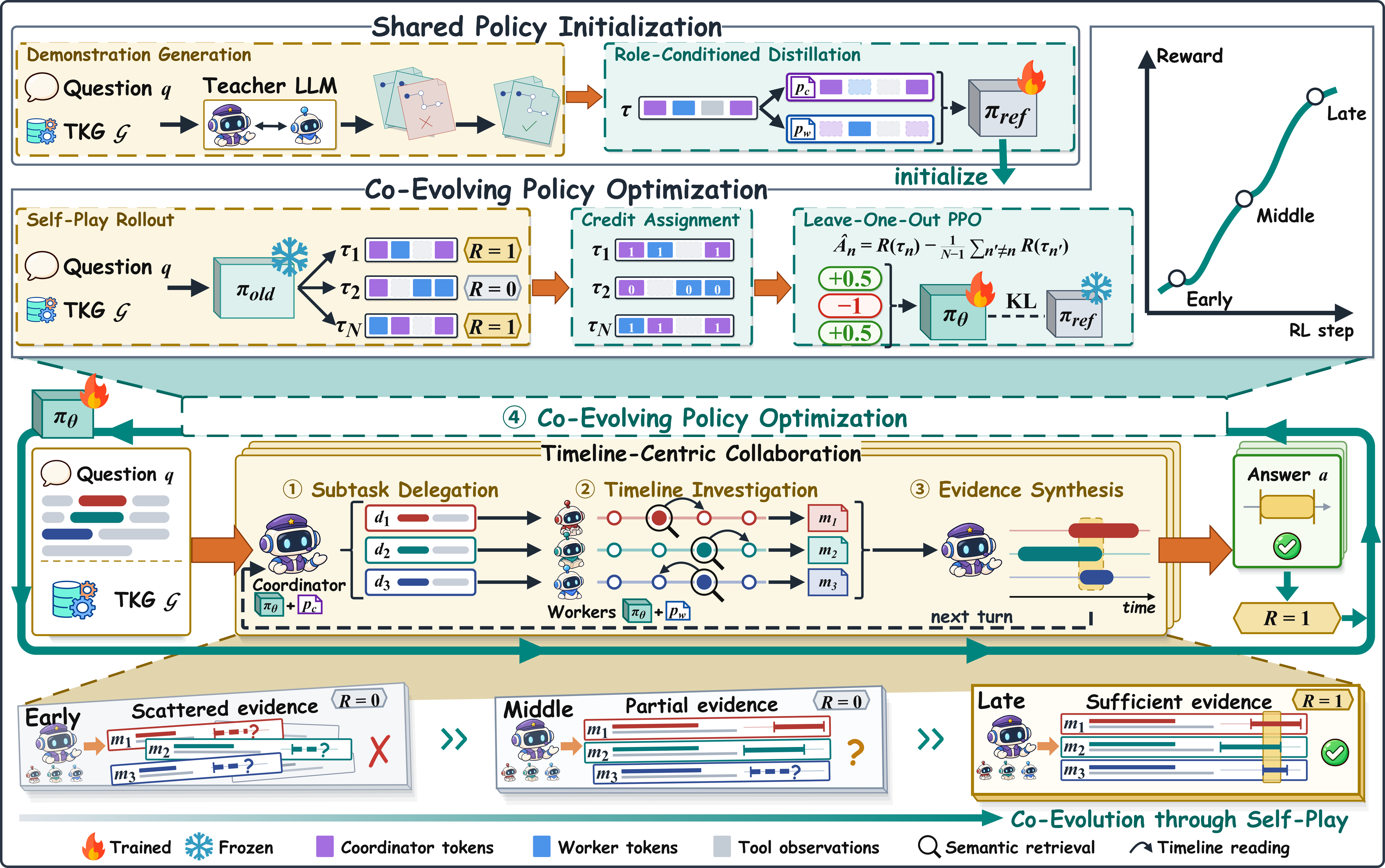}
  \caption{The framework of \system{}, which initializes a shared policy for the coordinator and workers from teacher demonstrations and optimizes it through self-play over timeline-centric collaboration (\ding{172}--\ding{175}), enabling the two roles to co-evolve. The reward curve is illustrative.}
  \label{fig:framework}
\end{figure*}

Existing TKGQA methods can be roughly categorized into retrieval-based and agent-based methods.
The two categories differ in whether the reasoning steps are fixed in advance or decided by an agent based on intermediate observations.

\subsection{Retrieval-based TKGQA Methods}

Early TKGQA methods are mainly embedding-based or semantic parsing-based~\cite{su2024tkgqasurvey}. CronKGQA~\cite{saxena2021cronkgqa} scores candidate entities and timestamps using TKG embeddings~\cite{multi2023, tdn2023, multilevel2023}.
Based on CronKGQA, TempoQR~\cite{mavromatis2022tempoqr} further grounds question representations in relevant entities and time scopes.
To reason over multiple related facts, EXAQT~\cite{jia2021complex}, TwiRGCN~\cite{sharma2023twirgcn}, and LGQA~\cite{liu2023lgqa} apply graph neural networks~\cite{wang2026comgrl, zhao2026hybrid} to question-relevant temporal subgraphs.
Instead of implicit scoring, semantic parsing-based methods express temporal constraints explicitly.
TEQUILA~\cite{jia2018tequila} decomposes a question into sub-questions and temporal constraints, while SF-TQA~\cite{ding2022sftqa} and Prog-TQA~\cite{chen2024progtqa} translate questions into query graphs and executable programs, respectively~\cite{zhuo2025effective, ijcai2025p413}.
However, these methods rely on task-specific representations or predefined constraint types and operators, making it difficult to cover the diverse temporal constraints in questions~\cite{su2024tkgqasurvey}.

LLM-enhanced retrieval methods leverage the language understanding and reasoning abilities of LLMs to interpret these constraints directly over retrieved facts.
GenTKGQA~\cite{gao2024gentkgqa} incorporates graph representations of temporal subgraphs into an LLM.
To handle implicit temporal constraints, TimeR4~\cite{qian2024timer4} rewrites them into explicit ones before time-aware retrieval.
PoK~\cite{qian2025pok} further plans retrieval, temporal ranking, and reasoning as a sequence of sub-objectives.
Meanwhile, RTQA~\cite{gong2025rtqa} recursively solves decomposed sub-questions using their intermediate answers.
To organize the evidence, TECQA~\cite{liu2026tecqa} retains temporal neighbors around the temporal anchors of a question.
Similarly, SCoP~\cite{guo2026scop} compiles temporal constraints into executable evidence filters.
Nevertheless, these methods follow predefined pipelines or plans generated in advance, rather than deciding each step based on intermediate observations.

\subsection{Agent-based TKGQA Methods}

To decide each step based on intermediate observations, agent-based methods employ LLM agents that iteratively invoke tools on TKGs.
TempAgent~\cite{hu2025tempagent} equips a ReAct~\cite{yao2023react} agent with temporal retrieval tools at multiple granularities.
Beyond a single trajectory, ARI~\cite{chen2024ari} induces reusable reasoning guidance from historical trajectories.
Similarly, MemoTime~\cite{tan2026memotime} reuses past reasoning through an experience memory and organizes sub-questions into a Tree of Time.
To improve the reliability of reasoning, SAR~\cite{hu2026sar} aligns retrieval with the TKG schema and reorders retrieved facts chronologically.
Instead of refining retrieval, Regret-Now~\cite{huang2026regretnow} detects abnormal reasoning steps and rolls back to earlier decisions.
However, these agents rely on prompting and stored experience, without training the underlying LLMs with interaction feedback.

RL-based agents further train the underlying LLMs with interaction feedback.
MRE~\cite{wen2026mre} optimizes multi-hop decisions with feedback from tree-structured exploration.
Temp-R1~\cite{gong2026tempr1} adds internal reasoning actions and trains the agent with a reverse curriculum.
To supervise tool interactions, TempTool-R1~\cite{huang2026temptoolr1} introduces order-sensitive tool rewards, while TKG-Thinker~\cite{jiang2026tkgthinker} combines answer, format, and retrieval rewards.
However, these agents investigate all required facts within a single reasoning context, where evidence from different event histories can become interleaved and distractors can misdirect subsequent retrieval.
They are also optimized as a single agent rather than as multiple interdependent roles.

Different from the aforementioned TKGQA methods, the proposed \system{} is a timeline-centric collaborative framework that learns collaboration among agents for complex TKGQA.
By distributing the investigations of different event histories across workers in separate contexts, \system{} keeps their evidence from interleaving, while co-evolving self-play jointly optimizes the workers and their coordinator as interdependent roles from final-answer rewards.

\section{Preliminary}
\label{sec:preliminary}

\noindent\textbf{Temporal Knowledge Graph (TKG).}
A TKG is represented as $\mathcal{G}=\{\mathcal{E},\mathcal{R},\mathcal{T},\mathcal{Q}\}$, where $\mathcal{Q}$ is a set of quadruplets over the entity set $\mathcal{E}$, the relation set $\mathcal{R}$, and the timestamp set $\mathcal{T}$.
Each quadruplet $(e_s,r,e_o,t)\in\mathcal{Q}$ denotes the fact that subject entity $e_s$ has relation $r$ with object entity $e_o$ at time $t$, where $t$ is either a timestamp in $\mathcal{T}$ or a time interval bounded by two timestamps~\cite{chen2023multitq,sun2025timelinekgqa}.

\noindent\textbf{Timeline.}
To read each fact among its temporal neighbors, we organize the facts of $\mathcal{G}$ into chronologically ordered timelines.
For an entity $e$, a relation $r$, and $\rho\in\{s,o\}$ marking subject or object, the timeline of $e$ collects the facts in which $e$ appears as $e_\rho$ under $r$ and orders them by time:
\begin{equation}
\mathcal{L}^{\rho}_{e,r}=\operatorname{sort}_{t}\!\left(\{f=(e_s,r,e_o,t)\in\mathcal{Q}\mid e_\rho=e\}\right),
\label{eq:timeline}
\end{equation}
where $\operatorname{sort}_{t}$ orders facts by timestamp or interval start.

\noindent\textbf{Temporal Knowledge Graph Question Answering (TKGQA).}
Given a TKG $\mathcal{G}$ and a natural-language question $q$, TKGQA aims to predict the answer set $\mathcal{A}_q$ of $q$ from the facts in $\mathcal{G}$, where each answer $a \in \mathcal{A}_q$ is either an entity or a temporal value~\cite{saxena2021cronkgqa,chen2023multitq,sun2025timelinekgqa}.
The facts required to derive $\mathcal{A}_q$ are called the \textit{supporting facts} of $q$~\cite{sun2025timelinekgqa}.
As the supporting facts are not given with $q$, we formulate TKGQA with a retriever $P(\mathcal{C} \mid q, \mathcal{G})$ that selects a fact set $\mathcal{C} \subseteq \mathcal{Q}$ and a reasoner $P(a \mid q, \mathcal{C})$ that derives the answers from $\mathcal{C}$:
\begin{equation}
P(a \mid q, \mathcal{G}) = \sum_{\mathcal{C} \subseteq \mathcal{Q}} P(a \mid q, \mathcal{C})\, P(\mathcal{C} \mid q, \mathcal{G}).
\label{eq:tkgqa}
\end{equation}
Since the number of fact sets grows exponentially with $|\mathcal{Q}|$, Eq.~(\ref{eq:tkgqa}) is approximated by its term for the fact set $\mathcal{C}^*$ retrieved for $q$.
As $P(\mathcal{C}^* \mid q, \mathcal{G})$ is the same for every candidate answer, $P(a \mid q, \mathcal{G}) \propto P(a \mid q, \mathcal{C}^*)$~\cite{peng2024graph}.

\noindent\textbf{Multi-Agent Collaboration.}
Multi-agent collaboration solves complex tasks by organizing a coordinator and multiple workers~\cite{mo2026matpo,ke2026masorchestra}.
Specifically, the coordinator decomposes a question $q$ into subtasks and delegates them to workers, each of which solves its subtask through multi-turn tool use within a separate context and returns a message.
Based on the returned messages, the coordinator either delegates further subtasks or derives the answer $a$, which can be formulated as:
\begin{equation}
\begin{aligned}
o_k &\sim \pi_{\mathrm{c}}(\cdot \mid q, \mathcal{H}_{<k}), && o_k \in \{\mathcal{D}_k, a\},\\
m_{k,j} &\sim \pi_{\mathrm{w}}(\cdot \mid d_{k,j}, \mathcal{G}), && d_{k,j} \in \mathcal{D}_k,
\end{aligned}
\label{eq:mac}
\end{equation}
where $o_k$ is the output of the coordinator at turn $k$, which is either a set of subtasks $\mathcal{D}_k$ or the answer $a$, and $\mathcal{H}_{<k}$ denotes the responses of the coordinator before turn $k$ together with the messages returned for their subtasks.
The message $m_{k,j}$ returned for subtask $d_{k,j}$ is typically the answer of the worker to this subtask~\cite{mo2026matpo,ke2026masorchestra}.
The coordinator and worker policies $\pi_{\mathrm{c}}$ and $\pi_{\mathrm{w}}$ can be instantiated by different LLMs or by a shared LLM with role-specific system prompts~\cite{mo2026matpo}.

\section{Methodology}
\label{sec:method}

In this section, we present \system, which consists of three components: \textit{Timeline-Centric Collaboration} (Section~\ref{sec:collaboration}) enables a coordinator and workers to collaboratively answer temporal questions; \textit{Shared Policy Initialization} (Section~\ref{sec:initialization}) initializes their shared policy through demonstration-based training; and \textit{Co-Evolving Policy Optimization} (Section~\ref{sec:selfplay}) further improves the shared policy through self-play, enabling the coordinator and worker roles to co-evolve. Figure~\ref{fig:framework} provides an overview of the framework.

\subsection{Timeline-Centric Collaboration}
\label{sec:collaboration}

Timeline-centric collaboration answers temporal questions with a coordinator and multiple workers.
It consists of three steps: Subtask Delegation, Timeline Investigation, and Evidence Synthesis.

\noindent\textbf{Subtask Delegation.}
Subtask delegation decomposes a question into subtasks that target different candidate timelines and assigns them to separate workers.
\system uses a shared policy $\pi_\theta$ for both roles, distinguished by role-specific system prompts $p_{\mathrm{c}}$ and $p_{\mathrm{w}}$.
At turn $k$, the coordinator generates a response conditioned on the question and collaboration history:
\begin{equation}
y^{\mathrm{c}}_k \sim \pi_\theta(\cdot \mid p_{\mathrm{c}}, q, \mathcal{H}_{<k}).
\label{eq:delegation}
\end{equation}
As defined in Eq.~(\ref{eq:mac}), the coordinator's output $o_k$ is either a set of subtasks $\mathcal{D}_k$ or the final answer $a$.
Each subtask $d_{k,j}\in\mathcal{D}_k$, such as identifying the UK prime minister immediately preceding Attlee, is assigned to a worker with its own context.
Workers investigate subtasks independently within each turn, while subsequent turns can build on returned messages to delegate additional subtasks.

\noindent\textbf{Timeline Investigation.}
Timeline investigation explores the candidate timelines associated with each subtask and gathers temporal evidence through two tools: semantic retrieval and timeline reading.
Following~\cite{mo2026matpo}, the worker receives both the original question $q$ and the assigned subtask $d_{k,j}$, and generates a response at step $u$ based on its earlier responses and tool observations $\xi_{k,j,<u}$:
\begin{equation}
y^{\mathrm{w}}_{k,j,u} \sim \pi_\theta(\cdot \mid p_{\mathrm{w}}, q, d_{k,j}, \xi_{k,j,<u}).
\label{eq:investigation}
\end{equation}
Each response either invokes a tool or ends the investigation.
Semantic retrieval returns the facts most similar to a query as entry points for timeline exploration.
Timeline reading uses an observed fact as a temporal anchor and returns neighboring facts from its subject or object timeline, as defined in Eq.~(\ref{eq:timeline}).
By following facts shared across timelines, the worker can navigate between related timelines.
Since each timeline is chronologically ordered, the worker can directly identify facts immediately before or after the anchor, such as the prime minister preceding Attlee.
Finally, the worker returns a message $m_{k,j}=(\mathcal{O}_{k,j},z_{k,j})$, where $\mathcal{O}_{k,j}$ contains all tool observations with their original timestamps, and $z_{k,j}$ is its report summarizing the investigation findings.

\begin{table*}[!t]
\centering
\captionsetup{skip=5pt}
\caption{Hits@1 (\%) on MultiTQ, TimelineKGQA-CronQuestion, and TimelineKGQA-ICEWS-Actor. Baselines are grouped into retrieval-based methods, prompting-based agents, and RL-based agents. Bold and underlined values denote the best and second-best scores, respectively.}
\label{tab:main_results}

\small
\renewcommand{\arraystretch}{1.08}
\setlength{\tabcolsep}{2.7pt}

\begin{adjustbox}{max width=\textwidth}
\begin{tabular}{l c c F @{\hspace{7pt}} c c F F @{\hspace{7pt}} c c F F}
\toprule
\multirow{2}{*}{Method}
& \multicolumn{3}{c}{MultiTQ}
& \multicolumn{4}{c}{TimelineKGQA-CronQuestion}
& \multicolumn{4}{c}{TimelineKGQA-ICEWS-Actor}
\\
\cmidrule(lr){2-4}
\cmidrule(lr){5-8}
\cmidrule(lr){9-12}
&
Overall
& Single
& \textbf{Multiple}
& Overall
& Simple
& \textbf{Medium}
& \textbf{Complex}
& Overall
& Simple
& \textbf{Medium}
& \textbf{Complex}
\\
\midrule

TimeR$^4$ (2024)~\citep{qian2024timer4}
& 72.8 & 88.7 & 33.5
& 40.5 & 77.8 & 24.2 & 26.4
& 36.5 & 58.0 & 22.2 & 20.4 \\

PoK (2026)~\citep{qian2025pok}
& 77.9 & \underline{92.9} & 40.9
& 65.1 & 73.7 & 53.9 & 68.3 & 60.2 & 74.4 & 45.6 & 57.8 \\

RTQA (2025)~\citep{gong2025rtqa}
& 76.5 & 90.2 & 42.4
& 29.8 & 60.8 & 21.8 & 13.5
& 20.1 & 22.9 & 19.7 & 15.6 \\

TECQA (2026)~\citep{liu2026tecqa}
& 81.1 & 88.9 & 61.6
& 34.0 & 46.2 & 27.0 & 30.9
& 38.0 & 55.4 & 24.9 & 27.6 \\

\arrayrulecolor{black!35}\specialrule{0.4pt}{0pt}{0pt}\arrayrulecolor{black}

TempAgent (2025)~\citep{hu2025tempagent}
& 70.2 & 85.7 & 31.6
& 43.0 & 63.2 & 29.1 & 39.8
& 49.9 & 54.1 & 44.1 & 51.6 \\

% MultiTQ: ARI Table 1 (Simple/Complex map to Single/Multiple).
ARI (2024)~\citep{chen2024ari}
& 38.0 & 68.0 & 21.0
& 37.1 & 79.9 & 17.3 & 22.0
& 51.7 & 76.1 & 35.3 & 33.8 \\

MemoTime (2026)~\citep{tan2026memotime}
& 77.9 & 86.8 & 53.8
& 27.8 & 72.9 & 10.0 & 9.2
& 40.7 & 69.0 & 19.5 & 23.6 \\

\arrayrulecolor{black!35}\specialrule{0.4pt}{0pt}{0pt}\arrayrulecolor{black}

Temp-R1 (2026)~\citep{gong2026tempr1}
& 78.0 & 88.8 & 55.0
& \underline{70.5} & \textbf{96.0} & 48.6 & 67.2 & 64.2 & \textbf{86.6} & 38.8 & 59.5 \\

TempTool-R1 (2026)~\citep{huang2026temptoolr1}
& 85.1 & \textbf{95.2} & 75.3
& 51.7 & 80.9 & 42.6 & 49.5 & 47.5 & 76.1 & 40.2 & 45.3 \\

TKG-Thinker (2026)~\citep{jiang2026tkgthinker}
& 85.5 & 91.0 & 72.1
& 46.0 & 56.7 & 29.4 & 52.3 & 50.8 & 55.6 & 40.9 & 58.3 \\

\midrule

\rowcolor{qwenrow}

\system (Qwen3-4B)
& \underline{86.8} & 91.3 & \underline{75.9}
& \textbf{71.8} & 82.3 & \textbf{56.8} & \textbf{76.9} & \textbf{72.6} & \underline{80.3} & \textbf{62.4} & \textbf{75.1} \\

\rowcolor{llamarow}

\system (Llama-3.1-8B)
& \textbf{89.9} & 91.7 & \textbf{85.5}
& 69.2 & \underline{82.9} & \underline{55.1} & \underline{71.0} & \underline{70.5} & 79.6 & \underline{59.6} & \underline{71.4} \\
\bottomrule
\end{tabular}
\end{adjustbox}
\end{table*}

\noindent\textbf{Evidence Synthesis.}
Evidence synthesis integrates observations and reports from different workers to resolve the temporal constraints of the original question.
The worker messages from turn $k$ are added to the collaboration history $\mathcal{H}_{<k+1}$, which informs the coordinator's next response in Eq.~(\ref{eq:delegation}).
With the original timestamps preserved, the coordinator can jointly reason over facts from different subtasks, such as determining their temporal order for a before constraint or intersecting their intervals for an overlap constraint.
Thus, the workers act as the retriever and the coordinator acts as the reasoner in Eq.~(\ref{eq:tkgqa}), where the retrieved fact set is $\mathcal{C}^*=\bigcup_{k,j}\mathcal{O}_{k,j}$.
The coordinator outputs the final answer $a$ when sufficient evidence is available; otherwise, it delegates additional subtasks, for up to $K$ turns.

A complete collaboration produces a trajectory $\tau$ of coordinator and worker responses, whose probability factorizes across the two roles:
\begin{equation}
\begin{aligned}
P_\theta(\tau\mid q,\mathcal{G})={}&
\prod_{k}\pi_\theta(y^{\mathrm{c}}_k\mid p_{\mathrm{c}},q,\mathcal{H}_{<k})\\
&\cdot\prod_{k,j,u}\pi_\theta(y^{\mathrm{w}}_{k,j,u}\mid p_{\mathrm{w}},q,d_{k,j},\xi_{k,j,<u}).
\end{aligned}
\label{eq:trajectory}
\end{equation}
As tool observations are not generated by $\pi_\theta$, they appear only as conditions in Eq.~(\ref{eq:trajectory}).
This factorization shows that both coordinator decisions and worker investigations are governed by the same policy parameters $\theta$, which are initialized and optimized in Sections~\ref{sec:initialization} and~\ref{sec:selfplay}.

\subsection{Shared Policy Initialization}
\label{sec:initialization}

Shared policy initialization equips the shared policy with the ability to perform both roles and collaborate before self-play.
It consists of two steps: Demonstration Generation and Role-Conditioned Distillation.

\noindent\textbf{Demonstration Generation.}
Demonstration generation produces complete collaboration trajectories on the TKG to supervise the shared policy.
A base LLM does not reliably perform subtask delegation, tool invocation, and message exchange within this collaboration framework.
We therefore employ a teacher LLM that acts as both the coordinator and workers, using the same role-specific prompts and tools.
Each demonstration thus records both how workers generate messages and how the coordinator uses them.
Since the teacher may also produce incorrect answers, we retain only trajectories with correct final answers to construct the demonstration set $\mathcal{S}$.

\noindent\textbf{Role-Conditioned Distillation.}
Role-conditioned distillation trains a single shared policy to perform both roles using the collected demonstrations.
Each demonstration $\tau\in\mathcal{S}$ is divided into one coordinator sample and one sample per worker, each conditioned on its corresponding role prompt, yielding 2,000 training samples per dataset.
Coordinator samples supervise subtask delegation and evidence synthesis, while worker samples supervise timeline investigation, tool invocation, and message generation.
Both types of samples update the same policy parameters by maximizing the likelihood of the demonstrations:
\begin{equation}
\mathcal{J}_{\mathrm{distill}}(\theta)=\mathbb{E}_{\tau\sim\mathcal{S}}\big[\log P_\theta(\tau\mid q,\mathcal{G})\big],
\label{eq:sft}
\end{equation}
where $P_\theta(\tau\mid q,\mathcal{G})$ is the trajectory probability defined in Eq.~(\ref{eq:trajectory}).
Tool observations and returned messages serve only as context and are excluded from the loss.
The resulting policy, denoted $\pi_{\mathrm{ref}}$, serves as both the initialization and reference policy for self-play in Section~\ref{sec:selfplay}.
Although the initialized policy can collaborate with itself, both roles have only learned from teacher-generated interactions rather than their own collaborative experience.

\subsection{Co-Evolving Policy Optimization}
\label{sec:selfplay}

Co-evolving policy optimization jointly improves subtask delegation, timeline investigation, and evidence synthesis through self-play, where the shared policy learns from the final outcomes of its own collaborations.
It consists of three steps: Self-Play Rollout, Cross-Role Credit Assignment, and Leave-One-Out PPO.

\noindent\textbf{Self-Play Rollout.}
Self-play rollout generates complete collaborations in which the current shared policy plays both the coordinator and worker roles~\cite{eisenstein2025collaborative}.
Unlike teacher-generated demonstrations, the coordinator and workers now interact through subtasks and messages produced by the shared policy itself.
For each training question $q$, a snapshot $\pi_{\mathrm{old}}$ of the shared policy independently samples $N\geq2$ trajectories $\{\tau_n\}_{n=1}^{N}$ according to Eq.~(\ref{eq:trajectory}).
Without reference labels for intermediate decisions, each collaboration is evaluated by its final answer.
The shared policy maximizes the expected final-answer reward:
\begin{equation}
\mathcal{J}(\theta)=\mathbb{E}_{q}\,\mathbb{E}_{\tau\sim P_\theta(\cdot\mid q,\mathcal{G})}\big[R(\tau)\big],
\label{eq:selfplay}
\end{equation}
where $R(\tau)=1$ if the final answer of $\tau$ belongs to the answer set $\mathcal{A}_q$, and $R(\tau)=0$ otherwise, including when no answer is returned.

\noindent\textbf{Cross-Role Credit Assignment.}
Cross-role credit assignment propagates the outcome of each collaboration to the decisions made by both roles.
By the likelihood-ratio identity, the gradient of Eq.~(\ref{eq:selfplay}) decomposes into contributions from the coordinator and workers:
\begin{equation}
\nabla_\theta\mathcal{J}(\theta)=\mathbb{E}_{q,\tau}\big[R(\tau)\,\big(g^{\mathrm{c}}(\tau)+g^{\mathrm{w}}(\tau)\big)\big],
\label{eq:gradient}
\end{equation}
where $g^{\mathrm{c}}(\tau)$ and $g^{\mathrm{w}}(\tau)$ sum $\nabla_\theta\log\pi_\theta$ over the coordinator and worker responses in Eq.~(\ref{eq:trajectory}), respectively.
Thus, the final-answer reward provides a shared learning signal for both roles, without requiring separate supervision for their intermediate decisions.
Each role is reinforced only when it succeeds together with its current partner.
As this partner keeps improving through the same updates, the coordinator and workers co-evolve~\cite{wang2025cure}.

\noindent\textbf{Leave-One-Out PPO.}
Leave-one-out PPO (LOOP) estimates trajectory-level advantages from final-answer rewards and uses them to optimize the shared policy.
LOOP was developed for long-horizon interactive agents that receive only a task-completion reward~\cite{chen2025loop}, which matches our setting where each collaboration receives only a final-answer reward.
Instead of training a separate value model, we estimate the advantage of each trajectory $\tau_n$ by comparing its reward with those of the other trajectories sampled for the same question~\cite{kool2019buy4,ahmadian2024rloo}:
\begin{equation}
\hat{A}_n=R(\tau_n)-\frac{1}{N-1}\sum_{n'\neq n}R(\tau_{n'}).
\label{eq:advantage}
\end{equation}
Excluding $\tau_n$ from its own baseline preserves the unbiasedness of the on-policy policy-gradient estimator~\cite{kool2019buy4}.
The resulting advantage $\hat{A}_n$ is shared across all tokens generated by the coordinator and workers in $\tau_n$.
We optimize the shared policy using the clipped PPO objective~\cite{schulman2017ppo}, with dual clipping~\cite{ye2020dualclip} and KL regularization toward the reference policy $\pi_{\mathrm{ref}}$ from Section~\ref{sec:initialization}:
\begin{equation}
\begin{aligned}
\mathcal{J}_{\mathrm{LOOP}}(\theta)
={}&\hat{\mathbb{E}}_{n,\ell}
\big[\mathcal{L}^{\mathrm{DC}}_{n,\ell}\big]\\
&-\beta\,\hat{\mathbb{E}}_{n,\ell}
\big[D_{\mathrm{KL}}(\pi_\theta(\cdot\mid h_{n,\ell})
\,\|\,\pi_{\mathrm{ref}}(\cdot\mid h_{n,\ell}))\big],
\end{aligned}
\label{eq:rl}
\end{equation}
where $\beta$ weights the KL term and $\mathcal{L}^{\mathrm{DC}}_{n,\ell}$ is the dual-clipped PPO surrogate:
\begin{equation}
\begin{aligned}
\mathcal{L}^{\mathrm{clip}}_{n,\ell}&=\min\big(\kappa_{n,\ell}\hat{A}_n,\ \mathrm{clip}(\kappa_{n,\ell},1-\epsilon,1+\epsilon)\hat{A}_n\big),\\
\mathcal{L}^{\mathrm{DC}}_{n,\ell}&=
\begin{cases}
\max(\mathcal{L}^{\mathrm{clip}}_{n,\ell},\eta\hat{A}_n), & \hat{A}_n<0,\\
\mathcal{L}^{\mathrm{clip}}_{n,\ell}, & \hat{A}_n\geq0.
\end{cases}
\end{aligned}
\label{eq:dualclip}
\end{equation}
Here $\kappa_{n,\ell}=\pi_\theta(x_{n,\ell}\mid h_{n,\ell})/\pi_{\mathrm{old}}(x_{n,\ell}\mid h_{n,\ell})$ is the probability ratio of the $\ell$-th generated token $x_{n,\ell}$ of $\tau_n$ under its role-specific context $h_{n,\ell}$, $\epsilon$ controls PPO clipping, and $\eta>1$ controls dual clipping.
The empirical expectation $\hat{\mathbb{E}}_{n,\ell}$ averages over tokens generated by both roles, excluding tool observations and returned messages.
PPO clipping limits policy updates relative to $\pi_{\mathrm{old}}$, while dual clipping bounds the objective for negative advantages with large ratios.
The KL term regularizes updates toward the collaboration behavior learned from demonstrations.

After each update, the improved shared policy generates new collaborations in both roles, allowing the coordinator and workers to adapt to each other's evolving behavior.
At inference, the trained policy performs all roles in the collaboration described in Section~\ref{sec:collaboration}, requiring only a single model.

\section{Experiments}
\label{sec:experiments}

In this section, we evaluate \system by answering the following research questions:
\textbf{RQ1}. Can \system outperform existing TKGQA methods and closed-source LLMs?
\textbf{RQ2}. How do timeline reading and multi-agent collaboration contribute to \system?
\textbf{RQ3}. How do tool observations and reports in worker messages affect evidence synthesis?
\textbf{RQ4}. How does co-evolving self-play improve the coordinator and workers?
\textbf{RQ5}. Do the gains of \system come from additional computation?

\subsection{Experimental Settings}
\label{sec:experimental_setup}
\paragraph{Datasets}
We evaluate \system on three TKGQA benchmarks: MultiTQ~\citep{chen2023multitq}, TimelineKGQA-CronQuestion, and TimelineKGQA-ICEWS-Actor~\citep{sun2025timelinekgqa}.
MultiTQ questions cover day, month, and year granularities and are categorized as \texttt{Single} or \texttt{Multiple} according to the number of temporal constraints.
Both TimelineKGQA benchmarks group questions into \texttt{Simple}, \texttt{Medium}, and \texttt{Complex} categories, requiring one, two, and three supporting facts, respectively.
Further dataset descriptions and statistics are provided in Appendix~\ref{app:datasets}.

\paragraph{Baselines}
We compare \system with three groups of LLM-based TKGQA methods:
(1) retrieval-based methods, including TimeR$^4$~\citep{qian2024timer4}, PoK~\citep{qian2025pok}, RTQA~\citep{gong2025rtqa}, and TECQA~\citep{liu2026tecqa};
(2) prompting-based agents, including TempAgent~\citep{hu2025tempagent}, ARI~\citep{chen2024ari}, and MemoTime~\citep{tan2026memotime};
and (3) RL-based agents, including Temp-R1~\citep{gong2026tempr1}, TempTool-R1~\citep{huang2026temptoolr1}, and TKG-Thinker~\citep{jiang2026tkgthinker}.
We also compare \system with two closed-source LLMs, GPT-5.6 Sol and GPT-6 Astra.
Brief descriptions of the baselines are provided in Appendix~\ref{app:baselines}.

\paragraph{Implementation Details}
Following prior work~\citep{gong2026tempr1,huang2026temptoolr1,jiang2026tkgthinker}, we adopt Hits@1 as the evaluation metric.
We fine-tune Qwen3-4B-Instruct~\citep{qwen3technicalreport} and Llama-3.1-8B-Instruct~\citep{grattafiori2024llama3} as backbones, using the same E5 retriever as~\citet{jin2025searchr1} to retrieve the top 15 facts per query.
Role-conditioned distillation uses demonstrations generated by Qwen3-30B-A3B-Instruct~\citep{qwen3technicalreport}.
For each backbone, we train one model on MultiTQ and a single model jointly on both TimelineKGQA benchmarks.
Table~\ref{tab:main_results} reports results on the full test sets.
The other experiments use 1,000 questions sampled from the dev set of each benchmark.
Detailed training settings are provided in Appendix~\ref{app:implementation_details}.

\subsection{Main Results (RQ1)}
\label{sec:main_results}
Table~\ref{tab:main_results} reports the performance of \system and the baselines on the three benchmarks.
\system achieves the best overall Hits@1 on all three benchmarks.
The Qwen variant outperforms all baselines on every benchmark, while the Llama variant achieves the highest accuracy of 89.9\% on MultiTQ.
Among the baselines, RL-based agents perform best, with TKG-Thinker leading on MultiTQ and Temp-R1 leading on both TimelineKGQA benchmarks.
This is because RL trains the underlying LLMs with interaction feedback.
Although these agents are also trained with RL, \system still outperforms them by further learning subtask delegation, timeline investigation, and evidence synthesis through co-evolving self-play.

The advantage of \system is most evident on questions with multiple supporting facts.
Both variants outperform all baselines on MultiTQ \texttt{Multiple} and on the \texttt{Medium} and \texttt{Complex} categories of both TimelineKGQA benchmarks.
For example, on TimelineKGQA-ICEWS-Actor \texttt{Complex}, the Qwen variant reaches 75.1\%, compared with 59.5\% for Temp-R1.
This advantage is attributed to the timeline-centric collaboration of \system.
Workers investigate the timelines of different subtasks in separate contexts, while the coordinator synthesizes their observations with original timestamps for joint temporal reasoning.
On \texttt{Single} and \texttt{Simple} questions, TempTool-R1 and Temp-R1 achieve higher accuracy, as these questions can be answered within a single investigation.

\begin{table}[t]
\centering
\captionsetup{skip=4pt}
\caption{Hits@1 (\%) of \system and closed-source LLMs. Bold denotes the best score in each column.}
\label{tab:gpt_comparison}

\small
\renewcommand{\arraystretch}{1.0}
\setlength{\tabcolsep}{4.5pt}
\begin{adjustbox}{max width=\columnwidth}
\begin{tabular}{lccc}
\toprule
\multirow{2}{*}{Method} & \multirow{2}{*}{MultiTQ}
    & \multicolumn{2}{c}{TimelineKGQA} \\
\cmidrule(lr){3-4}
& & CronQuestion & ICEWS-Actor \\
\midrule
GPT-5.6 Sol & 82.0 & 65.4 & 63.6 \\
GPT-6 Astra & 82.2 & 69.0 & 64.4 \\
\midrule
\rowcolor{qwenrow}
\system (Qwen3-4B) & 87.4 & \textbf{71.1} & \textbf{71.7} \\
\rowcolor{llamarow}
\system (Llama-3.1-8B) & \textbf{89.5} & 70.4 & 69.9 \\
\bottomrule
\end{tabular}
\end{adjustbox}
\Description{Hits@1 of GPT-5.6 Sol, GPT-6 Astra, and both CoSTA backbones on three benchmarks. Both CoSTA backbones score higher than the GPT models on all three benchmarks.}
\end{table}

Table~\ref{tab:gpt_comparison} further compares \system with two closed-source LLMs, GPT-5.6 Sol and GPT-6 Astra, both equipped with the E5 retriever used in our implementation.
Despite access to E5-retrieved facts, both the 4B and 8B variants of \system outperform these models on all three benchmarks.
For example, the Llama variant achieves 89.5\% on MultiTQ, compared with 82.2\% for GPT-6 Astra.
These results highlight the effectiveness of timeline-centric collaboration and co-evolving self-play in enabling compact open-source backbones to surpass closed-source LLMs in TKGQA.

\subsection{Ablation Experiments (RQ2)}
\label{sec:ablation}

\begin{table}[t]
\centering
\captionsetup{skip=4pt}
\caption{Ablation of timeline reading and collaboration (Hits@1, \%). TimelineKGQA reports the average over its two benchmarks. Bold denotes the best score in each column.}
\label{tab:ablation}

\small
\renewcommand{\arraystretch}{1.0}
\setlength{\tabcolsep}{4.5pt}
\begin{adjustbox}{max width=\columnwidth}
\begin{tabular}{lcccc}
\toprule
\multirow{2}{*}{Method} & \multicolumn{2}{c}{\cellcolor{qwenrow}\textbf{Qwen3-4B}} & \multicolumn{2}{c}{\cellcolor{llamarow}\textbf{Llama-3.1-8B}} \\
\cmidrule(lr){2-3} \cmidrule(lr){4-5}
& MultiTQ & TimelineKGQA & MultiTQ & TimelineKGQA \\
\midrule
\system & \textbf{87.4} & \textbf{71.4} & \textbf{89.5} & \textbf{70.2} \\
\quad w/o timeline reading & 69.7 & 71.2 & 64.8 & 69.0 \\
\quad w/o collaboration & 82.9 & 67.1 & 89.1 & 65.4 \\
\quad w/o both & 65.6 & 66.0 & 62.7 & 64.3 \\
\bottomrule
\end{tabular}
\end{adjustbox}
\end{table}

\begin{figure}[t]
  \centering
  \includegraphics[width=\columnwidth]{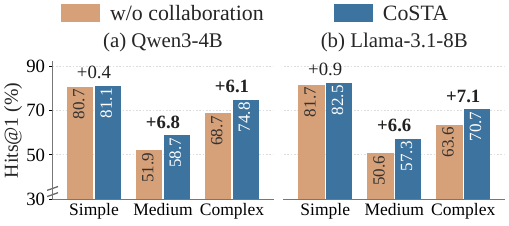}
  \caption{Hits@1 (\%) on TimelineKGQA questions of different difficulty.}
  \Description{Paired bars for Qwen3-4B and Llama-3.1-8B. Collaboration improves accuracy by less than one point on Simple questions and by 6.1 to 7.1 points on Medium and Complex questions.}
  \label{fig:difficulty}
\end{figure}

Table~\ref{tab:ablation} reports the ablation of two core designs of \system, namely timeline reading and multi-agent collaboration.
To isolate their contributions, all variants share the trained policy of \system and differ only at inference.
Without timeline reading, workers can only access facts through semantic retrieval.
Without collaboration, a single agent investigates and answers each question within one context.

Timeline reading is essential on MultiTQ.
Removing it reduces Hits@1 from 87.4\% to 69.7\% for Qwen and from 89.5\% to 64.8\% for Llama.
This is because many MultiTQ questions ask for the first or last fact, or the fact before or after another fact, which requires reading neighboring facts on a timeline.

Collaboration mainly improves TimelineKGQA.
Without collaboration, the TimelineKGQA accuracy drops from 71.4\% to 67.1\% for Qwen and from 70.2\% to 65.4\% for Llama.
As shown in Figure~\ref{fig:difficulty}, the drop concentrates on questions with multiple supporting facts.
Removing collaboration lowers the accuracy on \texttt{Medium} and \texttt{Complex} questions by 6.1 to 7.1 points for both backbones, compared with less than one point on \texttt{Simple} questions.
\texttt{Medium} and \texttt{Complex} questions require joint reasoning over the time intervals of two or three supporting facts.
In \system, workers investigate these facts in separate contexts, while the coordinator synthesizes their observations with original timestamps.

Overall, timeline reading mainly benefits MultiTQ, whereas collaboration mainly benefits TimelineKGQA.
Removing both designs yields the lowest accuracy in all settings, showing that the two designs address complementary challenges.

\subsection{Impact of Worker Messages (RQ3)}
\label{sec:messages}

\begin{figure}[t]
  \centering
  \includegraphics[width=\columnwidth]{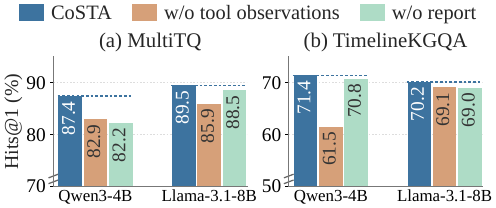}
  \caption{Hits@1 (\%) with different contents of worker messages. TimelineKGQA reports the average over its two benchmarks. Dashed lines mark \system.}
  \Description{Grouped bars for Qwen3-4B and Llama-3.1-8B on MultiTQ and TimelineKGQA. Removing tool observations or reports lowers accuracy in all settings; removing tool observations lowers Qwen's TimelineKGQA accuracy from 71.4 to 61.5.}
  \label{fig:returns}
\end{figure}

Figure~\ref{fig:returns} shows the effect of removing tool observations or reports from worker messages, as defined in Section~\ref{sec:collaboration}.
All variants use the same trained policy of \system and differ only in what workers return at inference time.
Without tool observations, workers return only reports; without reports, they return only tool observations.

Tool observations play a central role in evidence synthesis.
Removing them reduces accuracy in all four settings, with the largest drop for Qwen on TimelineKGQA (from 71.4\% to 61.5\%).
While reports summarize worker findings in natural language, tool observations retain the original timestamps needed to order facts and intersect their time intervals across subtasks.

Reports also contribute to performance.
Removing them reduces accuracy in all four settings, with the largest drop for Qwen on MultiTQ (from 87.4\% to 82.2\%).
On MultiTQ, timeline reading returns many neighboring facts for each subtask.
Reports relate these observations to the subtask, helping the coordinator identify relevant evidence.

Together, tool observations provide the original timestamps, whereas reports help the coordinator identify relevant evidence.
Retaining both yields the highest accuracy in all settings.

\subsection{Analysis of Co-Evolution (RQ4)}
\label{sec:selfplay_analysis}

\begin{figure}[t]
  \centering
  \includegraphics[width=\columnwidth]{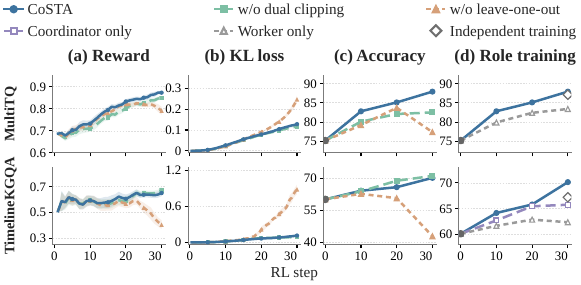}
  \caption{Training dynamics of \system with Llama-3.1-8B. Reward and KL loss are trailing means with standard-error bands; accuracy is Hits@1 (\%). TimelineKGQA reports the average over its two benchmarks.}
  \Description{A two-by-four grid of line plots for MultiTQ and TimelineKGQA. Without the leave-one-out baseline, the reward drops, the KL loss rises sharply, and TimelineKGQA accuracy falls to 42.9 percent. Training both roles together reaches 87.9 and 70.2 percent, above training only the coordinator, only the workers, or pairing independently trained roles.}
  \label{fig:selfplay}
\end{figure}

Figure~\ref{fig:selfplay} analyzes co-evolving policy optimization with Llama, where all variants start from the same distilled policy.
The variant w/o leave-one-out replaces the leave-one-out baseline in Eq.~(\ref{eq:advantage}) with the mean reward of the training batch.
The variant w/o dual clipping removes dual clipping from Eq.~(\ref{eq:dualclip}).
\textit{Coordinator only} and \textit{worker only} train one role and freeze the other at the distilled policy.
\textit{Independent training} pairs the coordinator from \textit{coordinator only} with the workers from \textit{worker only}.

Leave-one-out PPO stabilizes self-play.
Without the leave-one-out baseline, the reward declines and the KL loss increases rapidly after step 20.
The TimelineKGQA accuracy drops from 62.7\% at step 10 to 42.9\% at step 30, compared with 70.2\% for \system.
Removing dual clipping also lowers the MultiTQ accuracy from 87.9\% to 82.5\%.
This is because a batch-mean baseline mixes questions of different difficulty, so the advantage of a trajectory mainly reflects the difficulty of its question.
The leave-one-out baseline compares trajectories sampled for the same question, which provides a stable learning signal for both roles.

Training both roles together yields the highest accuracy.
On TimelineKGQA, \system reaches 70.2\%, compared with 65.8\% for \textit{coordinator only} and 62.4\% for \textit{worker only}.
\textit{Independent training} reaches 67.2\%, which remains 3.0 points below \system.
This is because the coordinator synthesizes evidence from worker messages, whereas workers investigate the subtasks delegated by the coordinator.
When only one role is trained, it adapts to a fixed partner, which limits its improvement after step 20.
When the two roles are trained independently, each adapts to a partner different from the one it collaborates with at inference.
With cross-role credit assignment, each role is reinforced only when it succeeds together with its current partner, so both roles improve together.

\begin{figure}[t]
  \centering
  \includegraphics[width=\columnwidth]{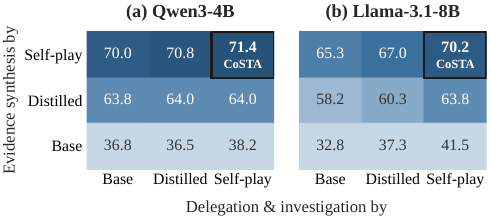}
  \caption{Hits@1 (\%) on TimelineKGQA when the subtasks and worker messages of each training stage (columns) are synthesized by the coordinator of each stage (rows).}
  \Description{Two three-by-three heatmaps for Qwen3-4B and Llama-3.1-8B. Rows give the stage of evidence synthesis and columns the stage of delegation and investigation. Accuracy rises along both axes and peaks when both come from the self-play policy, which is CoSTA, at 71.4 and 70.2 percent.}
  \label{fig:synthesis}
\end{figure}

Both roles improve through co-evolving policy optimization.
Figure~\ref{fig:synthesis} pairs the subtasks and worker messages produced at each training stage with the coordinator at each stage for evidence synthesis.
With the coordinator of the self-play policy, replacing the delegation and investigation of the base policy with those of the self-play policy improves the accuracy from 70.0\% to 71.4\% for Qwen and from 65.3\% to 70.2\% for Llama.
With the delegation and investigation of the self-play policy, replacing the distilled coordinator with the self-play coordinator improves the accuracy by 7.4 and 6.4 points, respectively.
Using the self-play policy for both roles, which is \system itself, yields the highest accuracy for both backbones.
This is because cross-role credit assignment propagates the final outcome to the decisions of both roles, so self-play improves subtask delegation and timeline investigation in addition to evidence synthesis.

Overall, leave-one-out PPO provides a stable learning signal for self-play, whereas cross-role credit assignment enables the coordinator and workers to co-evolve.

\subsection{Efficiency Analysis (RQ5)}
\label{sec:efficiency}

\begin{figure}[t]
  \centering
  \includegraphics[width=\columnwidth]{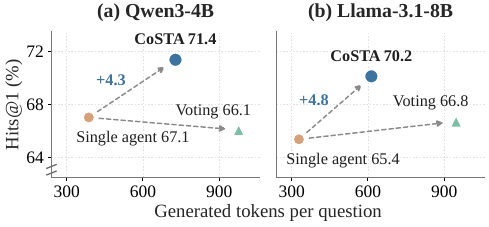}
  \caption{Hits@1 (\%) and generated tokens per question on TimelineKGQA. All variants use the same trained policy; arrows start from the single agent.}
  \Description{Two scatter plots for Qwen3-4B and Llama-3.1-8B. CoSTA reaches 71.4 and 70.2 percent with 728 and 614 generated tokens per question. Majority voting over three single-agent runs uses 977 and 947 tokens but reaches only 66.1 and 66.8 percent. The single agent uses 387 and 328 tokens and reaches 67.1 and 65.4 percent.}
  \label{fig:efficiency}
\end{figure}

Figure~\ref{fig:efficiency} compares \system on TimelineKGQA with the single agent in Section~\ref{sec:ablation} and with \textit{voting}, which runs the single agent three times independently and takes the majority answer.
The gains of \system cannot be explained solely by
generating more tokens.
\textit{Voting} generates more tokens than \system for both backbones, yet its accuracy is 5.3 and 3.4 points lower.
This is because voting spends additional tokens on repeated attempts at the whole question, whereas \system spends them on different subtasks investigated in separate contexts.

\section{Conclusion}
\label{sec:conclusion}

In this paper, we propose Co-Evolving Self-Play Timeline Agents (\system), a timeline-centric collaborative framework that learns collaboration among agents for complex TKGQA.
Specifically, \system organizes TKG facts into chronologically ordered timelines and delegates the investigations of different event histories to workers in separate contexts.
A coordinator then synthesizes their original timestamped observations for joint temporal reasoning.
To learn this collaboration, \system initializes a policy shared by both roles via role-conditioned distillation and jointly optimizes the coordinator and workers through co-evolving self-play with final-answer rewards.
Experimental results on three TKGQA benchmarks demonstrate that \system achieves state-of-the-art overall performance, with particularly strong gains on questions involving multiple supporting facts.
Further analyses show that separate investigations by workers and joint optimization of both roles each contribute to these gains.

\paragraph{Limitations.}
CoSTA relies on binary final-answer rewards for policy optimization. Although this enables end-to-end learning without intermediate supervision, it provides no role- or step-specific feedback, which may limit sample efficiency and scalability to longer collaborations.

\bibliographystyle{ACM-Reference-Format}
\balance
\bibliography{references}

% \nobalance
\appendix
\section*{Appendix}
\section{Dataset Details}
\label{app:datasets}

MultiTQ distinguishes questions by the number of temporal constraints, whereas TimelineKGQA organizes them by the number of supporting facts and associated temporal operations.
Tables~\ref{tab:multitq_statistics} and~\ref{tab:timelinekgqa_statistics} report question counts by category and split for MultiTQ and the two TimelineKGQA benchmarks, respectively.

MultiTQ~\citep{chen2023multitq} is generated from ICEWS05--15~\citep{garcia2018learning}, where events are recorded at day-level granularity.
Its questions involve temporal constraints at day, month, and year granularities.
The benchmark excludes questions that can be answered without using their temporal constraints.

\emph{Single} questions involve one temporal constraint and fall into three categories: \texttt{Equal}, \texttt{Before/After}, and \texttt{First/Last}.
\emph{Multiple} questions involve combinations of temporal constraints and fall into three categories: \texttt{Equal Multi}, \texttt{After First}, and \texttt{Before Last}.
For example, ``Who last visited South Korea in 2014?'' combines an \texttt{Equal} constraint with \texttt{First/Last} selection.
Each question is paired with a set of entity or time answers.

TimelineKGQA~\citep{sun2025timelinekgqa} generates two benchmarks using the CronQuestions~\citep{saxena2021cronkgqa} and ICEWS Actor~\citep{boschee2015icews} knowledge graphs.
We refer to these benchmarks as TimelineKGQA-CronQuestion and TimelineKGQA-ICEWS-Actor, respectively.

Questions are grouped into Simple, Medium, and Complex categories, requiring one, two, and three supporting facts, respectively.
Simple questions query entities, start and end times, or durations associated with a single fact.
Medium questions involve two facts and require reasoning about Allen interval relations, computing interval unions or intersections, or calculating and comparing durations.
The Complex category extends these operations to three facts and includes temporal ranking questions.
Answers include entities, relations, temporal values (timestamps, intervals, and durations), yes/no responses, and rankings.

\begin{table}[h]
\centering
\caption{Question counts by category and split in MultiTQ, as reported by~\citet{chen2023multitq}.}
\label{tab:multitq_statistics}

\small
\renewcommand{\arraystretch}{1.12}
\setlength{\tabcolsep}{4.5pt}

\begin{tabular}{llrrr}
\toprule
Group & Question type & Train & Validation & Test \\
\midrule

\multirow{3}{*}{Single}
& Equal & 135,890 & 18,983 & 17,311 \\
& Before/After & 75,340 & 11,655 & 11,073 \\
& First/Last & 72,252 & 11,097 & 10,480 \\

\cmidrule(lr){2-5}

\multirow{3}{*}{Multiple}
& Equal Multi & 16,893 & 3,213 & 3,207 \\
& After First & 43,305 & 6,499 & 6,266 \\
& Before Last & 43,107 & 6,532 & 6,247 \\

\midrule
\multicolumn{2}{l}{\textbf{Total}}
& \textbf{386,787}
& \textbf{57,979}
& \textbf{54,584} \\

\bottomrule
\end{tabular}
\end{table}

\begin{table}[h]
\centering
\caption{Question counts by difficulty and split for the two TimelineKGQA benchmarks, as reported by~\citet{sun2025timelinekgqa}.}
\label{tab:timelinekgqa_statistics}

\small
\renewcommand{\arraystretch}{1.12}
\setlength{\tabcolsep}{3.5pt}

\begin{tabular}{@{}llrrr@{}}
\toprule
Dataset & Difficulty & Train & Validation & Test \\
\midrule

\multirow{4}{*}{\shortstack[l]{TimelineKGQA-\\CronQuestion}}
& Simple  & 7,200 & 2,400 & 2,400 \\
& Medium  & 8,252 & 2,751 & 2,751 \\
& Complex & 9,580 & 3,193 & 3,193 \\
\cmidrule(lr){2-5}
& \textbf{Total}
& \textbf{25,032}
& \textbf{8,344}
& \textbf{8,344} \\

\midrule

\multirow{4}{*}{\shortstack[l]{TimelineKGQA-\\ICEWS-Actor}}
& Simple  & 17,982 & 5,994 & 5,994 \\
& Medium  & 15,990 & 5,330 & 5,330 \\
& Complex & 19,652 & 6,550 & 6,550 \\
\cmidrule(lr){2-5}
& \textbf{Total}
& \textbf{53,624}
& \textbf{17,874}
& \textbf{17,874} \\

\bottomrule
\end{tabular}

\end{table}

\section{Baseline Methods}
\label{app:baselines}

\noindent\textbf{ARI}~\citep{chen2024ari} guides an LLM agent with reasoning strategies induced from previous question-answering trajectories.
It groups similar questions and summarizes successful and unsuccessful trajectories into guidance for subsequent temporal tool use.

\noindent\textbf{TimeR$^4$}~\citep{qian2024timer4} uses retrieved facts to rewrite implicit temporal constraints as explicit timestamps, then retrieves and reranks evidence by semantic and temporal relevance.
Both the temporal retriever and the answering LLM are fine-tuned.

\noindent\textbf{TempAgent}~\citep{hu2025tempagent} follows a ReAct~\citep{yao2023react} loop that alternates reasoning with semantic and temporally constrained search.
Its retrieval tools support on, before and after conditions at day, month and year granularities.
The agent uses retrieved observations to guide subsequent actions without task-specific LLM fine-tuning.

\noindent\textbf{RTQA}~\citep{gong2025rtqa} recursively decomposes temporal questions into subquestions organized as a tree.
At each node, an LLM selects an answer using both retrieved facts and answers from child subquestions.
The framework uses prompting without task-specific LLM fine-tuning.

\noindent\textbf{PoK}~\citep{qian2025pok} generates a plan of retrieval, temporal ranking, and reasoning steps, using intermediate answers to resolve subsequent temporal constraints.
It uses a fine-tuned temporal retriever to obtain relevant facts and a fine-tuned LLM to generate answers from those facts.

\noindent\textbf{MemoTime}~\citep{tan2026memotime} decomposes temporal questions into a Tree of Time, using past reasoning experiences to guide decomposition and tool selection.
It combines graph traversal with dense retrieval and checks whether the evidence is sufficient to answer each subquestion.
Verified reasoning traces are stored in an external memory for reuse.

\noindent\textbf{TECQA}~\citep{liu2026tecqa} retrieves a subgraph using entity and relation constraints extracted from the question.
It selects events near the question's temporal anchors and orders them by temporal proximity to form an evidence chain.
A frozen LLM generates the answer from this chain.

\noindent\textbf{Temp-R1}~\citep{gong2026tempr1} combines external search with internal actions for planning, fact filtering, and temporal ranking.
It is trained with supervised fine-tuning followed by Group Relative Policy Optimization (GRPO)~\citep{shao2024deepseekmath}, using final-answer correctness as the reward.
Its reverse curriculum starts with complex questions and later adds simpler questions.

\noindent\textbf{TempTool-R1}~\citep{huang2026temptoolr1} interleaves reasoning with temporal tools for retrieving event times and entities under time constraints.
After supervised fine-tuning on tool-use trajectories, GRPO rewards output format and agreement with reference tool calls, including tool names, arguments and call order.

\noindent\textbf{TKG-Thinker}~\citep{jiang2026tkgthinker} plans and performs temporal searches over multiple turns, using retrieved observations to guide subsequent queries.
Supervised fine-tuning on reasoning trajectories is followed by online reinforcement learning.
Rewards account for final-answer correctness, interaction format and retrieval of facts containing the reference answer.

\section{Implementation Details}
\label{app:implementation_details}

\noindent\textbf{Role-Conditioned Distillation.}
We fine-tune Qwen3-4B-Instruct and Llama-3.1-8B-Instruct using demonstrations generated by Qwen3-30B-A3B-Instruct for questions in the training set.
Full-parameter fine-tuning runs for one epoch using AdamW~\citep{loshchilov2019adamw} with a batch size of 32, a learning rate of $2\times10^{-6}$, cosine decay, and a warmup ratio of 0.03.
The loss is computed only on model-generated tokens, with input prompts, tool observations, and returned messages masked out.
Training runs on four NVIDIA A800 GPUs (80GB) with FSDP and BF16 mixed precision.

\noindent\textbf{Reinforcement Learning.}
Following distillation, we train the shared policy with leave-one-out PPO~\citep{chen2025loop} in VeRL~\citep{sheng2025hybridflow}.
Each rollout batch contains 64 questions with $N=3$ trajectories per question for MultiTQ, or 24 questions with $N=8$ for TimelineKGQA.
Each batch is split by question into two mini-batches, giving two optimizer updates in one pass over the batch, with all trajectories of a question kept in the same mini-batch.
The rollout temperature is set to 1.0.
We set the clipping range $\epsilon = 0.2$, the dual-clip parameter $\eta = 3$, and the KL coefficient $\beta = 0.001$.
The distilled policy is frozen as the reference policy $\pi_{\mathrm{ref}}$.
We use AdamW with a learning rate of $1.5\times10^{-6}$, weight decay 0.01, and gradient clipping at 1.0.

\section{Theoretical Analysis}
\label{app:theory}

We analyze the role-specific gradients in Eq.~(\ref{eq:gradient}) and the leave-one-out advantage in Eq.~(\ref{eq:advantage}).

\noindent\textbf{Cross-Role Credit Assignment.}
Let $\mathcal{J}(\theta_{\mathrm{c}},\theta_{\mathrm{w}})$ denote Eq.~(\ref{eq:selfplay}) after assigning separate parameters to coordinator and worker factors in Eq.~(\ref{eq:trajectory}).

\begin{proposition}
\label{prop:cross_role}
At $\theta_{\mathrm{c}}=\theta_{\mathrm{w}}=\theta$, for $r\in\{\mathrm{c},\mathrm{w}\}$,
\[
\begin{aligned}
&\mathbb{E}_{q,\tau}[R(\tau)g^r(\tau)]\\
&\quad = \left.\nabla_{\theta_r}\mathcal{J}\right|_{\theta_{\mathrm{c}}=\theta_{\mathrm{w}}=\theta}.
\end{aligned}
\]
\end{proposition}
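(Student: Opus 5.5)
We prove the identity by untying the parameters, applying the likelihood-ratio identity to each block separately, and then re-tying them.

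Define the untied trajectory distribution
\[
P_{\theta_{\mathrm{c}},\theta_{\mathrm{w}}}(\tau\mid q,\mathcal{G})=\prod_{k}\pi_{\theta_{\mathrm{c}}}(y^{\mathrm{c}}_k\mid p_{\mathrm{c}},q,\mathcal{H}_{<k})\prod_{k,j,u}\pi_{\theta_{\mathrm{w}}}(y^{\mathrm{w}}_{k,j,u}\mid p_{\mathrm{w}},q,d_{k,j},\xi_{k,j,<u}),
\]
obtained from Eq.~(\ref{eq:trajectory}) by replacing $\theta$ with $\theta_{\mathrm{c}}$ in the coordinator factors and with $\theta_{\mathrm{w}}$ in the worker factors. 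Set $\mathcal{J}(\theta_{\mathrm{c}},\theta_{\mathrm{w}})=\mathbb{E}_q\sum_\tau R(\tau)P_{\theta_{\mathrm{c}},\theta_{\mathrm{w}}}(\tau\mid q,\mathcal{G})$.

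Several quantities do not depend on the parameters. The tool observations are generated by the environment, not by $\pi$, so they enter only as conditioning. The same holds for the reward $R(\tau)$, which depends only on the final answer, and for the question distribution. Consequently, all dependence of $P_{\theta_{\mathrm{c}},\theta_{\mathrm{w}}}$ on $\theta_r$ sits in the factors of role $r$.

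Fix $r=\mathrm{c}$ (the case $r=\mathrm{w}$ is symmetric). Differentiate under the sum over $\tau$; this is justified because the trajectory space is finite when the turn limit $K$ and the generation lengths are bounded, and otherwise by a dominated-convergence assumption. This gives
\[
\nabla_{\theta_{\mathrm{c}}}\mathcal{J}=\mathbb{E}_q\sum_\tau R(\tau)\nabla_{\theta_{\mathrm{c}}}P_{\theta_{\mathrm{c}},\theta_{\mathrm{w}}}(\tau).
\]
Apply the likelihood-ratio identity $\nabla P=P\,\nabla\log P$. The logarithm of the product becomes a sum. The worker terms and the environment terms have zero $\theta_{\mathrm{c}}$-gradient, so
\[
\nabla_{\theta_{\mathrm{c}}}\log P_{\theta_{\mathrm{c}},\theta_{\mathrm{w}}}(\tau)=\sum_k\nabla_{\theta_{\mathrm{c}}}\log\pi_{\theta_{\mathrm{c}}}(y^{\mathrm{c}}_k\mid\cdot).
\]

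Now evaluate at $\theta_{\mathrm{c}}=\theta_{\mathrm{w}}=\theta$. The untied distribution collapses to $P_\theta$, so the sampling measure becomes the self-play distribution appearing in Eq.~(\ref{eq:selfplay}). The gradient term becomes $\sum_k\nabla_\theta\log\pi_\theta(y^{\mathrm{c}}_k\mid\cdot)$, which is exactly $g^{\mathrm{c}}(\tau)$ as defined after Eq.~(\ref{eq:gradient}). Substituting yields $\mathbb{E}_{q,\tau}[R(\tau)g^{\mathrm{c}}(\tau)]$, which is the claim for $r=\mathrm{c}$.

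As a consistency check, the multivariable chain rule along the diagonal map $\theta\mapsto(\theta,\theta)$ gives $\nabla_\theta\mathcal{J}=\nabla_{\theta_{\mathrm{c}}}\mathcal{J}+\nabla_{\theta_{\mathrm{w}}}\mathcal{J}$, which recovers Eq.~(\ref{eq:gradient}).

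The only delicate step is the claim that $\theta_{\mathrm{c}}$ affects worker factors only through their conditioning, never through their parametrization. The subtasks $d_{k,j}$ produced by the coordinator do enter the worker factors, but only as sampled values. Since we differentiate $P(\tau)$ at a fixed $\tau$, these values are constants. Likewise, the messages in $\mathcal{H}_{<k}$ are fixed parts of $\tau$. So no hidden cross-dependence arises. The only remaining care point is stating the regularity needed to interchange differentiation and expectation.
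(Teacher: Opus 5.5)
Your proposal is correct and follows the same route as the paper. You untie the role parameters, observe that tool observations and the reward carry no parameter dependence so that $\nabla_{\theta_r}\log P=g^r$ on the diagonal, apply the likelihood-ratio identity, and recover Eq.~(\ref{eq:gradient}) by the chain rule along $\theta\mapsto(\theta,\theta)$. Beyond the paper's terse proof, you also spell out the interchange of differentiation and summation, and the fact that coordinator-generated subtasks enter the worker factors only as fixed sampled values.
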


\begin{proof}
Conditioned on tool calls and the fixed TKG, tool observations have parameter-independent distributions.
Thus $\nabla_{\theta_r}\log P=g^r$.
The reward depends only on the completed trajectory, so the likelihood-ratio identity proves the claim; the chain rule recovers Eq.~(\ref{eq:gradient}).
\end{proof}

Thus the final-answer reward supplies an on-policy gradient signal to each role under the current partner policy.
In the single-role and independent-training ablations, each optimized role instead interacts with a partner fixed at $\pi_{\mathrm{ref}}$.
This decomposition describes the expected objective gradient; practical updates use the clipped PPO surrogate.

\noindent\textbf{Leave-One-Out Advantage.}
For a question $q$, let $\rho_q=\Pr_{\pi_{\mathrm{old}}}(R=1\mid q)$.
Its $N\geq2$ sampled rewards $R_n=R(\tau_n)$ are conditionally independent Bernoulli variables.
Let $s_n=\nabla_\theta\log P_\theta(\tau_n\mid q,\mathcal{G})$ at $\pi_\theta=\pi_{\mathrm{old}}$.

\begin{proposition}
\label{prop:loo}
At $\pi_\theta=\pi_{\mathrm{old}}$,
\[
\begin{aligned}
\mathbb{E}[\hat{A}_n s_n\mid q] &= \mathbb{E}[R_n s_n\mid q],\\
\mathbb{E}[\hat{A}_n^2\mid q] &= \frac{N}{N-1}\rho_q(1-\rho_q).
\end{aligned}
\]
For a batch of $M$ questions and $N$ trajectories each, let $R_{i,n}$ denote the reward and $\rho_i=\rho_{q_i}$.
Define $B=MN$, $b=B^{-1}\sum_{i,n}R_{i,n}$, $\bar{\rho}=M^{-1}\sum_i\rho_i$, $\bar{v}=M^{-1}\sum_i\rho_i(1-\rho_i)$, and $V_\rho=M^{-1}\sum_i(\rho_i-\bar{\rho})^2$.
Then
\[
\begin{aligned}
&\mathbb{E}\!\left[\frac{1}{B}\sum_{i,n}(R_{i,n}-b)^2\mid q_{1:M}\right]\\
&\qquad = \left(1-\frac{1}{B}\right)\bar{v}+V_\rho.
\end{aligned}
\]
\end{proposition}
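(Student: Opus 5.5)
The plan splits into three parts, one for each identity in Proposition~\ref{prop:loo}, conditioning on $q$ (or on $q_{1:M}$) throughout. Everything follows from two facts: the $N$ trajectories for a question are sampled i.i.d.\ from $\pi_{\mathrm{old}}$, and the score has zero mean, $\mathbb{E}[s_n\mid q]=0$. The zero-mean property holds at $\pi_\theta=\pi_{\mathrm{old}}$ because $\sum_\tau \nabla_\theta P_\theta(\tau\mid q,\mathcal{G})=\nabla_\theta 1=0$. Tool observations do not depend on $\theta$, as noted after Eq.~(\ref{eq:trajectory}) and used in Proposition~\ref{prop:cross_role}, so $s_n$ is a genuine score of the trajectory distribution.

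\textbf{Unbiasedness.} I would write
\[
\hat{A}_n s_n = R_n s_n - \frac{1}{N-1}\sum_{n'\neq n} R_{n'} s_n .
\]
For $n'\neq n$, $R_{n'}$ depends only on $\tau_{n'}$, which is independent of $\tau_n$ given $q$. Hence
\[
\mathbb{E}[R_{n'}s_n\mid q]=\rho_q\,\mathbb{E}[s_n\mid q]=0,
\]
and the first identity follows.

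\textbf{Second moment of $\hat{A}_n$.} Since $\mathbb{E}[\hat{A}_n\mid q]=\rho_q-\rho_q=0$, the second moment equals the variance. Because $R_n$ is independent of the leave-one-out mean,
\[
\mathrm{Var}(\hat{A}_n\mid q)=\rho_q(1-\rho_q)+\frac{\rho_q(1-\rho_q)}{N-1}=\frac{N}{N-1}\rho_q(1-\rho_q).
\]

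\textbf{Batch-mean baseline.} I would use the identity
\[
\frac{1}{B}\sum_{i,n}(R_{i,n}-b)^2=\frac{1}{B}\sum_{i,n}R_{i,n}^2-b^2 .
\]
Since $R^2=R$ for Bernoulli rewards, the first term has expectation $\bar{\rho}$. For $b^2$, I would use $\mathbb{E}[b^2]=\mathrm{Var}(b)+(\mathbb{E} b)^2$ with $\mathbb{E}[b]=\bar{\rho}$. All $B$ rewards are independent given $q_{1:M}$, so
\[
\mathrm{Var}(b)=B^{-2}\sum_i N\rho_i(1-\rho_i)=\bar{v}/B .
\]
This gives
\[
\mathbb{E}[\cdot]=\bar{\rho}-\bar{\rho}^2-\frac{\bar{v}}{B}.
\]
The remaining step is to rewrite $\bar{\rho}-\bar{\rho}^2$ as $\bar{v}+V_\rho$. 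This holds because
\[
\bar{v}=\bar{\rho}-M^{-1}\sum_i\rho_i^2
\quad\text{and}\quad
V_\rho=M^{-1}\sum_i\rho_i^2-\bar{\rho}^2 .
\]
Combining these yields $(1-1/B)\bar{v}+V_\rho$.

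The main obstacle is bookkeeping in this last part. One has to recognize that the cross-question heterogeneity term $V_\rho$ emerges precisely from $\bar{\rho}-\bar{\rho}^2$. One must also condition on $q_{1:M}$ so that the rewards are independent but not identically distributed, and avoid treating $b$ as independent of $R_{i,n}$. Handling $b^2$ through its variance, rather than expanding cross terms directly, avoids that pitfall.
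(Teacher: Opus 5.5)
Your proposal is correct and follows essentially the same route as the paper's proof. The steps match: independence of the leave-one-out baseline from $s_n$ together with $\mathbb{E}[s_n\mid q]=0$ gives the first identity; zero mean plus independent variances gives the second; and the reduction $B^{-1}\sum_{i,n}(R_{i,n}-b)^2=b-b^2$ with $\mathbb{E}[b]=\bar{\rho}$, $\mathrm{Var}(b)=\bar{v}/B$, and $\bar{\rho}(1-\bar{\rho})=\bar{v}+V_\rho$ gives the batch-mean identity. Your added justifications (why the score has zero mean despite tool observations, and the explicit check of $\bar{\rho}-\bar{\rho}^2=\bar{v}+V_\rho$) fill in steps the paper only asserts.
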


\begin{proof}
The leave-one-out baseline excludes $\tau_n$ and is conditionally independent of its score.
Since $\mathbb{E}[s_n\mid q]=0$, the first identity follows~\citep{kool2019buy4}.
The advantage has zero conditional mean, and independence gives its variance as $\rho_q(1-\rho_q)[1+1/(N-1)]$.
For binary rewards, $B^{-1}\sum_{i,n}(R_{i,n}-b)^2=b-b^2$.
Conditional on the questions, $\mathbb{E}[b]=\bar{\rho}$ and $\mathrm{Var}(b)=\bar{v}/B$.
Substituting $\bar{\rho}(1-\bar{\rho})=\bar{v}+V_\rho$ proves the last identity.
\end{proof}

Averaging the leave-one-out second moment gives $N\bar{v}/(N-1)$, smaller than the batch-mean counterpart when $V_\rho>(1/(N-1)+1/B)\bar{v}$.
The between-question term $V_\rho$ reflects differences in question difficulty and is absent under within-question centering.
If all trajectories for a question receive the same reward, its leave-one-out advantages vanish.
This comparison concerns advantage second moments and does not guarantee lower variance for clipped PPO gradients.

\section{Additional Experiments}
\label{app:additional}

\subsection{Contribution of Training Stages}
\label{app:stages}

\begin{figure}[t]
  \centering
  \includegraphics[width=\columnwidth]{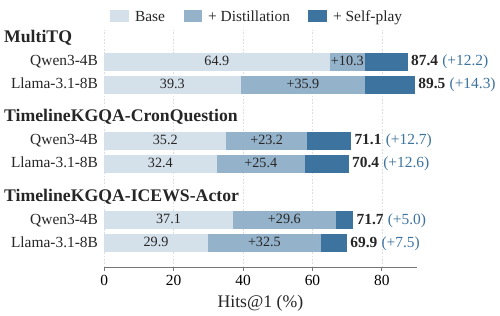}
  \caption{Hits@1 (\%) of the base, distilled, and self-play policies. Values in parentheses are the gains of co-evolving policy optimization.}
  \Description{Stacked horizontal bars for Qwen3-4B and Llama-3.1-8B on MultiTQ, TimelineKGQA-CronQuestion, and TimelineKGQA-ICEWS-Actor. Distillation raises accuracy by 10.3 to 35.9 points over the base policy. Self-play adds a further 5.0 to 14.3 points, reaching 87.4 and 89.5 percent on MultiTQ, 71.1 and 70.4 percent on CronQuestion, and 71.7 and 69.9 percent on ICEWS-Actor.}
  \label{fig:stages}
\end{figure}

Figure~\ref{fig:stages} reports the accuracy of \system after each training stage.
The base policy is the instruction-tuned backbone without training, which performs both roles with the same prompts and tools.
The distilled policy is obtained by role-conditioned distillation, and the self-play policy is \system itself.

Role-conditioned distillation provides the larger gain in five of the six settings.
It improves Hits@1 by 10.3 to 35.9 points over the base policy, with the largest gain for Llama on MultiTQ (from 39.3\% to 75.2\%).
This is because the base policy does not reliably perform subtask delegation, tool invocation, and message exchange, as discussed in Section~\ref{sec:initialization}.
Distillation teaches both roles these behaviors from complete collaborations.

Co-evolving policy optimization further improves all benchmarks.
It adds 5.0 to 14.3 points over the distilled policy, reaching 87.4\% for Qwen and 89.5\% for Llama on MultiTQ.
This is because the distilled policy only imitates teacher-generated collaborations, whereas self-play trains both roles on the final outcomes of their own collaborations.

Overall, role-conditioned distillation enables the shared policy to collaborate, whereas co-evolving policy optimization improves how the coordinator and workers collaborate.

\subsection{Analysis of Process Reward}
\label{app:process_reward}

\begin{figure}[t]
  \centering
  \includegraphics[width=\columnwidth]{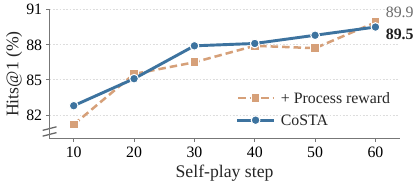}
  \caption{Hits@1 (\%) of Llama on MultiTQ during self-play with and without the process reward on worker reports.}
  \Description{Two lines over self-play steps 10 to 60. CoSTA with the final-answer reward reaches 82.8, 85.1, 87.9, 88.1, 88.8, and 89.5 percent. Adding the process reward gives 81.2, 85.5, 86.5, 87.9, 87.7, and 89.9 percent, lower than CoSTA at four of the six steps.}
  \label{fig:process_reward}
\end{figure}

Co-evolving policy optimization rewards each collaboration only by its final answer.
To examine whether workers benefit from additional supervision, we add a process reward on worker reports.
DeepSeek-V4-Flash serves as the judge and scores each report without access to the gold answer.
The score is added to the advantages of worker tokens with a weight of 0.2.
We train Llama on MultiTQ under the same settings as \system.

The final-answer reward is sufficient for co-evolving policy optimization.
As shown in Figure~\ref{fig:process_reward}, the process reward lowers the accuracy at four of the six steps, and the final accuracy differs by only 0.4 points (89.9\% vs. 89.5\%).
This is because cross-role credit assignment already propagates the final outcome to worker decisions, as shown in Appendix~\ref{app:theory}, whereas the judge evaluates each report without the subsequent evidence synthesis.
In addition, the process reward requires an LLM call for every worker report during training.

\section{Case Study}
\label{app:case_study}

\begin{table}[t]
\centering
\caption{Case study on TimelineKGQA-ICEWS-Actor with Qwen3-4B.}
\label{tab:case_study}
\footnotesize
\renewcommand{\arraystretch}{1.15}
\begin{tabularx}{\columnwidth}{@{}l X@{}}
\toprule
\multicolumn{2}{@{}p{\columnwidth}@{}}{
\textbf{Q:} From when to when, Network of East West Women Poland
Affiliation To Human Rights IGOs (Non-Governmental Organizations),
at the same time, Stipe Suvar Affiliation To Government (Serbia),
at the same time, István Tarlós Affiliation To Fidesz?
}\\
\multicolumn{2}{@{}l@{}}{
\textbf{Gold:} 1988-06-30 to 1989-05-17
}\\
\midrule
\multicolumn{2}{@{}l@{}}{\textbf{\system}}\\
Worker 1 & $(-\infty, +\infty)$\\
Worker 2 & [1988-06-30, 1989-05-17]\\
Worker 3 & $(-\infty, +\infty)$, read from Tarlós's timeline\\
Coordinator &
Intersection: 1988-06-30 to 1989-05-17
\textcolor{oursink}{\ding{51}}\\
\midrule
\multicolumn{2}{@{}l@{}}{\textbf{Single agent}}\\
Tarlós--Fidesz &
[2006-01-01, $+\infty$), the first search result\\
Answer & No Answer \textcolor{red}{\ding{55}}\\
\bottomrule
\end{tabularx}
\end{table}

Table~\ref{tab:case_study} shows a question that requires intersecting three time intervals.
\system assigns each fact to a separate worker.
By reading the timeline of István Tarlós, the third worker obtains the complete interval of his Fidesz affiliation.
The single agent investigates all three facts in one context and adopts the interval starting in 2006 from its first search result, which leads to an empty intersection.

\end{document}